\documentclass{article}

\usepackage{microtype}
\usepackage{graphicx}
\usepackage{graphicx,lipsum}
\usepackage{booktabs}
\usepackage{multirow}
\usepackage{colortbl}
\usepackage[dvipsnames]{xcolor}
\usepackage{subcaption}

\usepackage{amsmath}
\usepackage{amssymb}
\usepackage{mathtools}
\usepackage{amsthm}
\usepackage{arxiv}
\usepackage[capitalize,noabbrev]{cleveref}

\crefname{table}{Table}{Tables}
\crefname{section}{Section}{Sections}
\crefname{figure}{Figure}{Figures}
\crefname{appendix}{Appendix}{Appendices}

\theoremstyle{plain}
\newtheorem{theorem}{Theorem}[section]

\newtheorem{lemma}[theorem]{Lemma}

\theoremstyle{definition}
\newtheorem{definition}[theorem]{Definition}

\theoremstyle{remark}
\newtheorem{remark}[theorem]{Remark}

\usepackage[textsize=tiny]{todonotes}

\title{Dynamic Memory Based Adaptive Optimization}
\author{
 \parbox{\linewidth}{\centering
Balázs Szegedy$^{\dag}$, Domonkos Czifra$^{\dag}$, Péter Kőrösi-Szabó$^{\dag}$
}
\\
\\
$^{\dag}$Alfréd Rényi Insititute of Mathematics, Budapest, Hungary\\
\small{\texttt{\{szegedy, doma945, koszpe\}@renyi.hu}}
}
\date{}

\begin{document}
\maketitle

\begin{abstract}
Define an optimizer as having memory $k$ if it stores $k$ dynamically changing vectors in the parameter space. Classical SGD has memory $0$, momentum SGD optimizer has $1$ and Adam optimizer has $2$. We address the following questions: \textit{How can optimizers make use of more memory units? What information should be stored in them? How to use them for the learning steps?} As an approach to the last question, we introduce a general method called "Retrospective Learning Law Correction" or shortly RLLC. This method is designed to calculate a dynamically varying linear combination (called {\it learning law}) of memory units, which themselves may evolve arbitrarily. We demonstrate RLLC on optimizers whose memory units have linear update rules and small memory ($\leq 4$ memory units). Our experiments show that in a variety of standard problems, these optimizers outperform the above mentioned three classical optimizers. We conclude that RLLC is a promising framework for boosting the performance of known optimizers by adding more memory units and by making them more adaptive. 
\end{abstract}

\section{Introduction}
In this paper, we investigate optimizers that store $k$ vectors in the parameter space $\mathbb{R}^n$ of a neural network or more generally in the parameter space related to any optimization problem. We call such vectors {\it memory units} and we measure the memory usage of an optimizer by the number them. 

The simplest example for an optimizer with memory is the momentum SGD optimizer which stores a single vector $m$ (\textit{momentum vector}) in the parameter space $\mathbb{R}^n$. In each step, $m$ is updated according to the rule $m\longleftarrow \beta m+\bigtriangledown_\theta f(\theta)$ where $f$ is the objective function, $\theta\in\mathbb{R}^n$ is the parameter vector and $0<\beta<1$ is a fixed real number. The vector $\theta$ is updated according to the rule $\theta\longleftarrow\theta-cm$ where $c>0$ is the learning rate. 

The Adam optimizer operates with two memory units. One of them is the \textit{momentum vector} and the other one is the momentum of the squares of the gradient vectors. In contrast with the momentum optimizer, the Adam optimizer is not linear in the gradient vectors. Neither the update rule of the memory units, nor the way the memory units are used for the parameter update is linear. 

The present paper has two independent contributions. The first contribution is a novel and simple method that we call RLLC={\it Retrospective Learning Law Correction}. It is an update rule for a vector $L$ (called {\it learning law}) that describes a natural way of using a set of dynamically changing memory units for the update of the parameter vector $\theta$. More precisely, $L\in\mathbb{R}^k$ contains the coefficients of a linear combination of the $k$ memory units which is multiplied by a fixed learning rate $c_1$ and substracted from $\theta$ as usual. In each step, before updating $\theta$ and the memory units, we update $L$ by the formula $L\longleftarrow L+c_2M^+g$ where $M$ is the $n\times k$ matrix formed by the memory units, $M^+$ is the Moore-Penrose inverse of $M$, $g$ is the newly received gradient and $c_2$ is the {\it meta learning rate}. The main idea behind the update rule for $L$ is that the new gradient $g$ contains retrospective information on how the algorithm could have performed better in the previous step. Thus it can be used to compute a corrected version of $L$ which "thinks more ahead". Note that our update rule of the learning law can also be regarded as a general framework for associating a $k$-dimensional adaptive learning rate with an arbitrary set of $k$ evolving memory units.   

As the second main contribution, we examine optimizers in which memory units are updated by fixed linear rules. More precisely, in each step each memory unit is updated to a linear combination of the memory units and the new arriving gradient. The parameter vector is updated by a (possibly changing) liner combination (given by the learning law $L$) of the memory units. Such optimizers are interesting even if the learning law is fixed. They include SGD, momentum SGD and Nesterov Accelerated Gradient (NAG) \cite{Nesterov2012GradientMF}. Thus, the linear framework provides a useful generalization of these famous optimizers and enables a dynamically changing continuous interpolation between them. The RLLC method turns out to be ideal for this. Our experiments show that linear memory combined with RLLC leads to powerful optimizers. The case of memory $1$ is already interesting. A memory $1$ linear optimizer stores a momentum vector. Applying RLLC in this trivial setting yields a variant of the momentum SGD optimizer enhanced with a new type of adaptive learning rate. As the number of memory units increases, the mathematics becomes more complex, presenting a field of study that is interesting in its own right. We present some of the fundamental properties of linearly updated memory units. In particular, we prove a version of basis independence for RLLC combined with linear memory which allows us to apply basis transformations to the update rules without changing the optimization process. This together with a variant of the Jordan normal form over the field $\mathbb{R}$ helps to convert these optimizers into a canonical form in which each memory unit is associated with a so-called Jordan block. A Jordan block of size $1$ corresponds to a single memory unit (denoted by $M(\beta)$) storing a momentum vector of the gradients with parameter $\beta$. A Jordan block of size $2$ either corresponds to a pair of memory units (denoted by $CM(\beta),\beta\in\mathbb{C}$) namely the real and imaginary parts of a momentum vector with complex parameter or to a pair of memory units $m_1,m_2$ (denoted by $M_2(\beta)$) where $m_1$ is a momentum vector of the gradient and $m_2$ is a momentum vector of $m_1$, both with parameter $\beta$. In general, there are two infinite families of Jordan blocks giving rise to $k$-tuples or $2k$-tuples of memory units denoted by $M_k(\beta)$ and $CM_k(\gamma)$. These are the fundamental building blocks of linearly updated memory. We denote the natural operation by $\oplus$ which combines these building blocks into larger memory by the union of the corresponding memory units. By slightly abusing the notation we often identify memory update rules with optimizers where learning is given by the RLLC method. For example, $M(\beta)$ also denotes the memory $1$ optimizer with memory unit $M(\beta)$ (a momentum vector) and with RLLC. Notice that the $M(\beta)$ optimizer is a close relative of momentum SGD but it is not equivalent with it.

In our experiments, we identified a number of interesting simple settings involving few (at most $4)$ memory units. These include the types of optimizers $M(\beta)$, $M(\beta)\oplus M(0)$, 
$M(\beta_1)\oplus M(\beta_2)\oplus M(\beta_3)$, $M_2(\beta)$, $M_3(\beta)$ and $M(\beta){\oplus}M(-\beta){\oplus}CM(\beta i)$.
We observed that these optimizers often surpassed the performance of three commonly used optimizers across a variety of tasks even without carefully optimizing the parameters $\beta_i$.
Notice that $M(\beta)\oplus M(0)$ is an adaptively changing linear combination of SGD, momentum SGD and NAG. Thus, it adaptively interpolates between three well known optimizers. 
Remarkably, it demonstrated competitive or even superior performance compared to the Adam optimizer in many tasks, which also uses two memory units.

This paper primarily aims not to challenge all existing optimizers in the field, but rather to introduce a novel mathematical concept that could spark further research. The experimental results presented here should be interpreted as an illustration of the potential of our approach. We posit that the implications of the RLLC method extend beyond mere enhancements to current optimization techniques, suggesting broader applications and insights in the realm of optimization and machine learning.

\bigskip

\section{Related Work}

In the field of deep neural networks, the most commonly used optimizers are still hand-designed, such as momentum SGD, RMSProp, or Adam \cite{adam_opt}.
Tremendous attempts were made to improve their performance, or exceed them with different approaches.
Finding the best optimizer in general is not possible, see \cite{optimizers_no_free_lunch}. However by restricting the tasks, and hence, finding a better optimizer can be regarded as a learning process.
This learning can focus either within a single task, or across multiple tasks.

For the former, typical approaches include adaptive learning based methods \citep{HyperAdam, switch_from_Adam_to_SGD} and extensions of hand-crafted optimizers (e.g.: by replacing some components with RNN) \citep{learning_to_optimize_by_lr, learning_to_optimize_by_lr2, learning_to_optimize_primer_and_benchmark}. However these methods aim to speed up single task convergence, but adaptive learning may have some drawbacks, especially on  NLP tasks, and certain synthetic problems \cite{adaptive_methods_slow_convergence}.

The later one, namely learned optimization, trains a function (typically an auxiliary neural network) to optimize the model that solves the original tasks.
Some of them employ traditional gradient-descent based optimizers \citep{andrychowicz2016learning, metz2022velo, 155621, 10.5555/3305890.3306069}, other methods utilize evolutionary algorithms \citep{155621, metz2020tasks}, or reinforcement learning \citep{li2017learning, bello2017neural}.
These methods aim to discover an optimizer via some form of learning that enhance performance over a diverse set of tasks, surpassing the effectiveness of commonly used, hand-designed optimizers.

Our approach aligns more with the adaptive learning based methods. However, rather than focusing on performance enhancement for specific group of tasks, our objective is to establish a comprehensive mathematical framework,  which supports the combination of many existing optimizers, and enables the exploration of new optimization algorithms. 
While our goal is different, our work shows some similarities with learned-optimization as well.
Training an optimizer involves backpropagating through many unrolled inner optimization steps, a process that is not only computationally intensive but also susceptible to instability, as highlighted by \cite{metz2020tasks}.
Within this context, our RLLC method could be interpreted as a kind of differential-free optimization technique. In this paper, we focus solely looking one step ahead, but one could easily extend it beyond that.

\section{Retrospective Learning Law Correction}

\noindent{\bf Functional approach to optimizers:}~~The RLLC method is presented through an abstract mathematical framework for optimizers. This framework is somewhat specialized, yet it maintains sufficient generality to encompass a range of interesting optimizers. We think of optimizers as entities with an evolving internal state that updates at each step based on newly received gradients. Additionally, the optimizer calculates a parameter update vector relevant to the optimization process. A functional description of such an optimizer is given in the following definition.

\begin{definition}\label{funcopt} An {\it optimizer} for $n$ parameters is a pair of functions of the form $F:\mathcal{S}\times\mathbb{R}^n\to\mathcal{S}$ and $G:\mathcal{S}\times\mathbb{R}^n\to\mathbb{R}^n$ where $\mathcal{S}$ is the set of possible internal states, $F$ is the {\it state update function} and $G$ is the {\it parameter update function}. 
\end{definition}

To translate optimizers into an actual optimization process we choose an initial internal state $S_0\in\mathcal{S}$ and an initial parameter vector $\theta_0\in\mathbb{R}^n$. Then we iterate $$S_t:=F(S_{t-1},g_t),~~\theta_t:=\theta_{t-1}+G(S_t,g_t)$$ where $g_t$ is a gradient vector received by the optimizer in the $t$-th step. To illustrate this formalism, assume that the optimizer is given by $\mathcal{S}=\mathbb{R}^n$, $F(v,w)=\beta v+w, G(v,w)=-cv$. In this case, we obtain the momentum SGD with learning rate $c$ and decay parameter $\beta$. 

\noindent{\bf Optimizers with memory and RLLC:}~~We will think of memory $k$ optimizers in a way that the internal state space is of the form $\mathbb{R}^{n\times k}\times\mathcal{H}$ where the columns of matrices in $\mathbb{R}^{n\times k}$ represent $k$ vectors in the parameter space $\mathbb{R}^n$ and $\mathcal{H}$ will be called the space of hidden states. We typically assume that $n$ is a large number and that the hidden states are described by much fewer than $n$ parameters. A {\it memory update rule} for $k$ memory units is a function of the form $U:(\mathbb{R}^{n\times k}\times\mathcal{H})\times\mathbb{R}^n\to\mathbb{R}^{n\times k}\times\mathcal{H}$ where the external $\mathbb{R}^n$ component represents new arriving gradients. Such a function does not yet determine an optimizer. The RLLC method is designed to turn memory update rules into optimizers by extending their state space with a vector called learning law and introducing a natural parameter update function. We give two different descriptions of RLLC. The first one is a functional description which is more convenient for proofs. 

We will need the so-called Moore-Penrose inverse which is defined for an arbitrary matrix $A\in\mathbb{R}^{n\times k}$ and is denoted by $A^+$. Note that if $A$ has rank $k$ (which means that $A$ is non-degenerate if $n\geq k$) then $A^+=(A^TA)^{-1}A^T$.

\begin{definition}[RLLC functional form]\label{RLLCdef} Let $U$ be a memory update rule as above. Then the corresponding RLLC optimizer with learning rates $c_1,c_2$ is given as follows. The state space is $\mathcal{S}:=\mathbb{R}^{n\times k}\times\mathcal{H}\times\mathbb{R}^k$ where the extra component $\mathbb{R}^k$ is called the learning law. The functions $F,G$ are given in the following way. Assume that $M\in\mathbb{R}^{n\times k},H\in\mathcal{H},L\in\mathbb{R}^k,g\in\mathbb{R}^n$. Then
$$F(M,H,L,g):=(U_1(M,H),U_2(M,H),L+c_2M^+g)$$
$$G(M,H,L,g):=-c_1ML.$$ 
\end{definition}

\begin{remark} Vectors in $\mathbb{R}^m$ are considered to be column vectors. This means that they are treated as $m\times 1$ matrices in calculations.   
\end{remark}

The second, less abstract approach to RLLC describes the optimization process directly in a more conventional way.


\noindent{\bf Require:} 
\vspace{-0.2cm}
\begin{itemize}
\vspace{-0.2cm}\item $\theta_0$ initial parameter vector
\vspace{-0.2cm}\item $f(\theta)$: stochastic objective function with parameters $\theta\in\mathbb{R}^n$
\vspace{-0.2cm}\item Two learning rates $c_1,c_2>0$
\vspace{-0.2cm}\item Stability parameter $\epsilon$ for relaxed Penrose inverse
\vspace{-0.2cm}\item $M_0\in\mathbb{R}^{n\times k}$: initial memory units 
\vspace{-0.2cm}\item $L_0\in\mathbb{R}^k$: initial learning law
\vspace{-0.2cm}\item $H_0$: initial hidden state
\end{itemize}

$t:=0$: initialize time step

{\bf while:} $\theta_t$ not converged {\bf do}:

~~$t\longleftarrow t+1$

\medskip

\tikzstyle{mybox} = [draw=black, fill=blue!20, very thick,
    rectangle, rounded corners, inner sep=10pt, inner ysep=15pt]

\tikzstyle{mybox2} = [draw=black, fill=green!20, very thick,
    rectangle, rounded corners, inner sep=10pt, inner ysep=15pt]

\tikzstyle{mybox3} = [draw=black, fill=yellow!20, very thick,
    rectangle, rounded corners, inner sep=10pt, inner ysep=15pt]

\tikzstyle{mybox4} = [draw=black, fill=red!20, very thick,
    rectangle, rounded corners, inner sep=10pt, inner ysep=15pt]
    
\tikzstyle{fancytitle} =[fill=gray, text=white]

\begin{tikzpicture}
\node [mybox4] (box){%
    \begin{minipage}{0.40\textwidth}
~~$g_t:=\bigtriangledown_\theta f_t(\theta_{t-1})$ (Get gradients w.r.t. stochastic objective at timestep t)

    \end{minipage}
};
\node[fancytitle, right=10pt] at (box.north west) {Get New Gradient};
\end{tikzpicture}

\medskip

\vspace{-0.2cm}\begin{tikzpicture}
\node [mybox] (box){%
    \begin{minipage}{0.40\textwidth}

~~$L_t:=L_{t-1}+c_2M_{t-1}^+g_t$ 
    \end{minipage}
};
\node[fancytitle, right=10pt] at (box.north west) {RLLC Step};
\end{tikzpicture}

\medskip

\vspace{-0.2cm}\begin{tikzpicture}
\node [mybox2] (box){%
    \begin{minipage}{0.40\textwidth}
~~$(M_t,H_t):=U(M_{t-1},H_{t-1},g_t)$
    \end{minipage}
};
\node[fancytitle, right=10pt] at (box.north west) {Memory Update};
\end{tikzpicture}

\medskip

\vspace{-0.2cm}\begin{tikzpicture}
\node [mybox3] (box){%
    \begin{minipage}{0.40\textwidth}
~~$\theta_t:=\theta_{t-1}-c_1M_tL_t$ 
    \end{minipage}
};
\node[fancytitle, right=10pt] at (box.north west) {Learning Step};
\end{tikzpicture}

\bigskip

\noindent{\bf Explanation of RLLC and remarks:}

The idea behind the learning rule update is that with the arrival of the new gradient $g_t$ the optimizer gains new (retrospective) information on how it could have done better in the previous learning step. Notice that the vector $M^+g_t$ is the coefficient vector of the orthogonal projection of $g_t$ to the space spanned by the memory units when written as a linear combination of the memory units. This means that, if performed with the new law, the outcome of the parameter update in step $t-1$ would have been $\theta_t-c_1c_2p_t$ instead of $\theta_t$ where $p_t$ is the orthogonal projection of $g_t$ to the space spanned by the memory units in the $t-1$-th step. Notice that $(p_t,g_t)=(p_t,p_t)\geq 0$ and thus the change $-c_1c_2p_t$ points in a direction which improves the objective function. 

The above heuristics does not take it into account that the objective function $f_t$ is also changing. This fact indicates that our update rule is more justified if the second learning rate $c_2$ is small and thus random effects have time to average out leaving only useful directions in the update. Also notice that the algorithm does not "go back in time" to perform the improved learning step. Instead it applies the updated learning law with the updated memory units. This shows that the efficiency of the RLLC method depends on a type of consistency property. Roughly speaking it assumes that the notion of a "good learning law" does not change too much in time and so improvements of the past give improvements of the future. For this reason the choice of the memory update rule is a crucial issue which is one of the main topics of the second part of this paper. 

\begin{remark} The performance of the RLLC optimizer is dependent on the initialization of the learning law at the beginning. In practice it is not initialized to be $0$.
\end{remark}

\begin{remark} To avoid numerical instability, in practice we use a relaxed version of Penrose inverse which has a parameter $\epsilon$ set to a small number.    
\end{remark}

\noindent{\bf Linear invariance of RLLC:}~~We close this chapter with a useful linear invariance property of the RLLC method. We will need the next two definitions.

\begin{definition}\label{memconj} Let $U$ be a memory update rule as above and let $Q\in\mathbb{R}^{k\times k}$ be an invertible matrix. We define the new memory update rule $U^Q$ in the following way:
Let $M\in\mathbb{R}^{n\times k}$, $H\in\mathcal{H}$ and $U(MQ^{-1},H,g)=(M_2,H_2)$. Then 
$$U^Q(M,H,g):=(M_2Q,H_2).$$
\end{definition}

\begin{definition} Two optimizers given by $(F,G)$ and $(F',G')$ with state spaces $\mathcal{S}$ and $\mathcal{S}'$ are called equivalent if there is a bijection $\phi:\mathcal{S}\to\mathcal{S}'$ (called an {\it isomorphism}) such that $\phi(F(S,v))=F'(\phi(S),v)$ and $G(S,v)=G'(\phi(S),v)$. A {\it partial isomorphism} is a bijection  between a subset of $\mathcal{S}$ and a subset of $\mathcal{S}'$ having the same property. If there is such a function we say that the two optimizers are partially equivalent on these two subsets. In particular, if two optimizers with memory $k$ states are partially equivalent on states with rank $k$ memory matrices then we call them {\it essentially equivalent}.
\end{definition}

It is easy to see that if two optimizers are equivalent then they define the same optimization process if their initialization of internal states is isomorphic. If two optimizers are partially equivalent with partial isomorphism $\phi$ then the optimization processes are identical as long as they operate on states in the domain (and image) of $\phi$. 

\begin{lemma}[Linear invariance of RLLC]\label{RLLCinv} Let $U$ be a memory update rule as above and $Q\in\mathbb{R}^{k\times k}$ be an arbitrary matrix. Then the RLLC optimizer corresponding to $U$ is essentially equivalent to the RLLC optimizer corresponding to $U^Q$. 
\end{lemma}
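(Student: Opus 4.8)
The plan is to write down an explicit isomorphism between the two state spaces and verify the two intertwining identities from the definition of (partial) equivalence. Both RLLC optimizers live on the same state space $\mathcal{S}=\mathbb{R}^{n\times k}\times\mathcal{H}\times\mathbb{R}^{k}$, and the parameter update function $G(M,H,L,g)=-c_{1}ML$ of \Cref{RLLCdef} does not involve $U$ at all, so the natural candidate is $\phi(M,H,L):=(MQ,\,H,\,Q^{-1}L)$. This is a global bijection of $\mathcal{S}$ with inverse $(M',H',L')\mapsto(M'Q^{-1},H',QL')$, and since $Q$ is invertible, $M$ has rank $k$ iff $MQ$ does, so $\phi$ restricts to a bijection of the subset $\mathcal{S}^{\ast}\subseteq\mathcal{S}$ of states whose memory matrix has rank $k$. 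It is chosen precisely so that $(MQ)(Q^{-1}L)=ML$, which gives at once $G^{Q}(\phi(M,H,L),g)=-c_{1}(MQ)(Q^{-1}L)=-c_{1}ML=G(M,H,L,g)$, the second identity.

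The real content is the first identity, $\phi\bigl(F(M,H,L,g)\bigr)=F^{Q}\bigl(\phi(M,H,L),g\bigr)$, and it reduces to a single algebraic fact: for $M\in\mathbb{R}^{n\times k}$ of rank $k$ and $Q\in\mathbb{R}^{k\times k}$ invertible one has $(MQ)^{+}=Q^{-1}M^{+}$. I would prove this from the formula $A^{+}=(A^{T}A)^{-1}A^{T}$ quoted in the text, which applies to both $M$ and $MQ$ since both have full column rank: $(MQ)^{+}=\bigl(Q^{T}M^{T}MQ\bigr)^{-1}Q^{T}M^{T}=Q^{-1}(M^{T}M)^{-1}(Q^{T})^{-1}Q^{T}M^{T}=Q^{-1}(M^{T}M)^{-1}M^{T}=Q^{-1}M^{+}$. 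This is the only place the rank-$k$ hypothesis is used, and it is exactly why the lemma claims \emph{essential} rather than full equivalence.

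Granting this identity, the rest is bookkeeping. Unwinding \Cref{memconj} with input $MQ$ (so that $(MQ)Q^{-1}=M$), the memory/hidden output of $U^{Q}$ at $(MQ,H,g)$ is $\bigl(U_{1}(M,H,g)\,Q,\;U_{2}(M,H,g)\bigr)$, while the learning-law coordinate of $F^{Q}(\phi(M,H,L),g)$ is $Q^{-1}L+c_{2}(MQ)^{+}g=Q^{-1}L+c_{2}Q^{-1}M^{+}g=Q^{-1}\bigl(L+c_{2}M^{+}g\bigr)$. Comparing with $\phi(F(M,H,L,g))=\bigl(U_{1}(M,H,g)\,Q,\;U_{2}(M,H,g),\;Q^{-1}(L+c_{2}M^{+}g)\bigr)$ shows the two coincide, so $\phi$ restricted to $\mathcal{S}^{\ast}$ is a partial isomorphism defined on all rank-$k$ states, i.e.\ the two optimizers are essentially equivalent. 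The main obstacle is really just spotting and proving the Moore--Penrose identity $(MQ)^{+}=Q^{-1}M^{+}$; a secondary point to keep straight is that $\phi$ must be set up on the rank-$k$ states so that this identity applies to the inputs fed to $F$.
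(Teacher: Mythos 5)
Your proof is correct and takes essentially the same route as the paper: the identical partial isomorphism $\phi(M,H,L)=(MQ,H,Q^{-1}L)$ restricted to rank-$k$ states, hinging on the identity $(MQ)^{+}=Q^{-1}M^{+}$. The paper's proof is a one-line citation of these two facts; you simply supply the verification details, including a derivation of the Moore--Penrose identity from $A^{+}=(A^{T}A)^{-1}A^{T}$ that the paper leaves implicit.
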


\begin{proof} We claim that the function $\phi(M,H,L):=(MQ,H,Q^{-1}L)$ is a partial isomorphism on states with rank $k$ memory matrices. This follows trivially from formulas in definition \ref{RLLCdef} and the fact that $(MQ)^+=Q^{-1}M^+$ holds if ${\rm rank}(M)=k$.
\end{proof}

\section{Linear memory updates}\label{linmemupdt}

Throughout this chapter we investigate linear memory update rules with no hidden states. Such an update rule is given by 
\begin{equation}\label{linupdt}
U(M,g):=MB+ga^T
\end{equation}
where $M\in\mathbb{R}^{n\times k}$ is the memory matrix, $g\in\mathbb{R}^n$ is a new gradient and $a\in\mathbb{R}^k,B\in\mathbb{R}^{k\times k}$ are fixed parameters of the update rule. In an optimization process this means that the memory unit $m_i$ represented by the $i$-th column of $M$ is updated to
$a_ig+\sum_{j=1}^kB_{j,i}m_i$ when the new gradient $g$ is received.

\noindent{\bf Linear memory optimizers with fixed learning law:}~~Linearly updated memory units are interesting independently of the RLLC method. We can directly obtain powerful optimizers by using a fixed hand designed learning law $L\in\mathbb{R}^k$. This type of optimizer, denoted by $\mathcal{L}(B,a,L)$ works by the equations:
$$M_t=M_{t-1}B+g_ta^T~,~\theta_t=\theta_{t-1}-M_tL.$$ 
If $k=1$ then $B,a,L$ are single real numbers. The corresponding optimizer is a momentum SGD optimizer with decay parameter $B$ and learning rate $aL$. Another, important setting is described in the following lemma (for proof see \cref{SGD_Nesterov_proof}).

\begin{lemma} Let

$$B=
\begin{pmatrix}
\beta & 0\\
0 & 0 
\end{pmatrix},~ a=
\begin{pmatrix}
1 \\
1 
\end{pmatrix},~L=
\begin{pmatrix}
c\beta \\
c 
\end{pmatrix}
$$
Then the corresponding optimizer $\mathcal{L}(B,a,L)$ is Nestorv Accelerated Gradient with decay parameter $\beta$ and learning rate $c$.  
\end{lemma}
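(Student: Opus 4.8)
The plan is to unfold the two coupled recursions that define $\mathcal{L}(B,a,L)$ for these particular $B,a,L$, identify the two memory units explicitly, and then recognize the resulting parameter update as a standard presentation of NAG.

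First I would write the memory update $M_t = M_{t-1}B + g_t a^T$ column by column. Since $B$ is diagonal with entries $\beta$ and $0$ and $a=(1,1)^T$, the first column satisfies $m_1^{(t)} = \beta\, m_1^{(t-1)} + g_t$ and the second satisfies $m_2^{(t)} = g_t$, where $g_t = \nabla f(\theta_{t-1})$. Thus $m_1$ is exactly the classical momentum (exponential running sum) of the gradients with decay $\beta$, and $m_2$ merely holds the current gradient. Substituting into $\theta_t = \theta_{t-1} - M_t L$ with $L=(c\beta,c)^T$ gives $\theta_t = \theta_{t-1} - c\beta\, m_1^{(t)} - c\, g_t = \theta_{t-1} - c\big(g_t + \beta\, m_1^{(t)}\big)$.

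Next I would match this with a standard form of NAG. Setting $v_t := -c\,m_1^{(t)}$ turns the memory recursion into $v_t = \beta v_{t-1} - c\,\nabla f(\theta_{t-1})$ and the parameter update into $\theta_t = \theta_{t-1} + \beta v_t - c\,\nabla f(\theta_{t-1})$; eliminating $\nabla f(\theta_{t-1}) = (\beta v_{t-1} - v_t)/c$ yields the single recursion $\theta_t = \theta_{t-1} + \beta^2 v_{t-1} - (1+\beta)c\,\nabla f(\theta_{t-1})$, which is precisely the reformulation of Nesterov's accelerated gradient in which the gradient is evaluated at the running iterate (the Sutskever/Bengio form), with momentum coefficient $\beta$ and learning rate $c$. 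Equivalently, one can read $\theta_t = \theta_{t-1} - c\big(g_t + \beta\, m_1^{(t)}\big)$ directly as the ``momentum buffer plus Nesterov correction'' update used in common implementations. Finally I would note that the initializations correspond: the natural choice $M_0 = 0$ matches $v_0 = 0$, the rest state of NAG, so the two processes coincide step by step.

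The only real obstacle is that ``Nesterov Accelerated Gradient'' appears in the literature in several superficially different but equivalent forms; the substance of the argument is to pick the formulation whose gradient is taken at the current iterate (so that it is compatible with the convention $g_t = \nabla f(\theta_{t-1})$ built into $\mathcal{L}$) and to check that the change of variables $v_t = -c\,m_1^{(t)}$ carries one recursion onto the other. Everything else is a direct expansion of the defining equations.
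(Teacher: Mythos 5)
Your proposal is correct and is essentially the computation in the paper's appendix, run in the opposite direction: you expand the $\mathcal{L}(B,a,L)$ recursions to obtain $\theta_t=\theta_{t-1}-c\bigl(g_t+\beta m_1^{(t)}\bigr)$ and then change variables ($v_t=-c\,m_1^{(t)}$) to land on the Sutskever--Bengio single-sequence form of NAG, whereas the paper starts from the two-sequence form $\phi_{t+1}=\theta_t-r\nabla f(\theta_t)$, $\theta_{t+1}=\phi_{t+1}+\beta(\phi_{t+1}-\phi_t)$ and rewrites it via $v_{t+1}=\phi_t-\phi_{t+1}$ and $v_t=rm_t$ into exactly the same update, so the two arguments meet in the middle. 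The only blemish is cosmetic: your phrase ``eliminating $\nabla f(\theta_{t-1})$'' is misstated (what you actually eliminate is $v_t$ in favour of $v_{t-1}$ and the gradient), but the resulting recursion $\theta_t=\theta_{t-1}+\beta^2 v_{t-1}-(1+\beta)c\,\nabla f(\theta_{t-1})$ is algebraically correct and does identify the intended form of NAG with decay $\beta$ and learning rate $c$.
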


\noindent{\bf Abstract rule of a memory unit:}~~ There is a useful observation which sheds more light on what information linear memory stores if this memory update is iterated in an optimization process started with initial value $0^{n\times k}$ for $M$. By induction we have 
$$U(...U(U(0^{n\times k},g_1),g_2),...,g_t)=\sum_{i=1}^t g_ta^TB^{t-i}.$$
We obtain that at time $t$ (after $t$ iteration of the update rule) the value of the $i$-th memory unit is given by
\begin{equation}\label{abstrule}
m_j=\sum_{i=0}^{t-1} g_{t-i}(a^TB^i)_j
\end{equation}
where $(a^TB^i)_j$ denotes the $j$-th coordinate of the row vector $a^TB^i$. If we regard gradients with index $0$ or negative index as $0$ then the sum can be taken from $0$ to infinity. Informally speaking, this means that $m_i$ is a fixed (time independent) linear combination of previous gradients going backwards in time. 
This linear combination is represented by the infinite sequence $\{(a^TB^i)_j\}_{i=0}^\infty$ for the $j$-th memory unit. We say that this infinite sequence is the {\it abstract rule} of the memory unit. To guarantee that older gradients are taken with decaying weight in (\ref{abstrule}) we need to assume that the spectral norm of $B$ is smaller than $1$. 

\medskip

\noindent{\bf Real momentum:}~Memory updates in the case $k=1$ are determined by two numbers: $\beta=B_{1,1}$ and $\alpha=a_1$. The update rule of the single memory unit $m$ in the $t$-th step is $m\longleftarrow \alpha g_t+\beta m$. (This is essentially the update rule of a momentum vector.) It follow from our formula that the abstract rule in this case is given by the geometric sequence $\beta^i\alpha$. In particular, in the $t$-th step we have that $m=\sum_{i=0}^t \alpha\beta^i g_{t-i}$. 
\medskip

\noindent{\bf Complex momentum:}~The case 

\begin{equation}\label{comprep} B=\begin{pmatrix}
\alpha & -\beta\\
\beta & \alpha 
\end{pmatrix}
\end{equation}

has a distinguished role because such matrices 
represent complex numbers $\gamma=\alpha+\beta i$. In this special case the two memory units can be interpreted as the real and the complex parts of a single complex valued memory unit which describes a momentum vector with complex parameter $\gamma$. More precisely $m_1$ and $m_2$ are the real and the complex parts of a memory unit $m\in\mathbb{C}^n$ which is updated according to $m\longleftarrow g_t+\gamma m$.
 
\medskip

\noindent{\bf Jordan block of size $2$:}\label{jordan_prop} ~Another interesting example for $k=2$ is given by 
$$B=\begin{pmatrix}
\alpha & 1\\
0 & \alpha 
\end{pmatrix}$$
which is the so called Jordan block of size $2$ with eigenvalue $\alpha$. The first one of the two memory units is the momentum vector of the gradients with parameter $\alpha$. However the second memory unit stores something new. It is the momentum vector of the first memory unit with parameter $a$. One can show that the abstract rule corresponding to this memory unit is given by the infinite sequence $0,1,2\alpha,3\alpha^2,4\alpha^3,\dots$. 

\medskip

\noindent{\bf Propagators and their unions:}~In general, we call $k$ memory units $m_1,m_2,\dots,m_k$ connected by a joint linear update rule a {\it propagator} of dimension $k$. Recall that such an object is described by a matrix $B\in\mathbb{R}^{k\times k}$ 
 and a vector $a\in\mathbb{R}^k$. In the previous two examples each real number $\beta$ is associated with a propagator denoted by $M(\beta)$ of dimension $1$ while complex numbers $\gamma$ are associated with a propagator denoted by $CM(\gamma)$ of dimension $2$. We call such propagators {\it momentum propagators}. It will be important for us that there is a simple operation on propagators that we call {\it union} and denote by $\oplus$. This is simply just taking the union of the corresponding memory units together and updating them independently.  From a linear algebraic point of view, the matrix $B$ corresponding to the union of propagators is a block diagonal matrix whose blocks contain the matrices of the individual propagators. The vector $a$ corresponding to the union is the concatenation of the vectors of the propagators. Unions of momentum propagators will be called {\it multi momentum propagators}.

\medskip

\section{Optimizers with linear memory and RLLC}

In this chapter we discuss the basic properties of optimizers which combine linear memory and the RLLC method. We use the term LM-RLLC optimizers for them. Based on definition \ref{RLLCdef} and formula (\ref{linupdt}) one can produce the LM-RLLC optimizer $\mathcal{F}(B,a,c_1,c_2)$ with hyperparameters $B,a,c_1,c_2$. The corresponding update functions are given by $$F(M,L,g)=(MB+ga^T,L+c_2M^+g)$$ $$G(M,L,g)=-c_1ML$$ For the sake of completeness we describe the recursive optimization process.

\begin{definition}(LM-RLLC optimization process) Let us fix the hyperparameters $B\in\mathbb{R}^{k\times k},a\in\mathbb{R}^k,(c_1,c_2)\in\mathbb{R}^2.$ Then the LM-RLLC optimizer with these hyperparameters is given by the equations
$$L_t=L_{t-1}+c_2M_{t-1}^+g_t$$
$$M_t=M_{t-1}B+g_ta^T$$ 
$$\theta_t=\theta_{t-1}-c_1M_tL$$
where $M_0$ is the $0$ matrix in $\mathbb{R}^{n\times k}$ and $L_0\in\mathbb{R}^k$ is a suitable (typically non $0$) vector.
\end{definition}

Deeper mathematical analysis reveals that LM-RLLC optimizers can be transformed into a simpler, canonical form if we look at them up to equivalence. The key observation is a "basis independence" property of LM-RLLC optimizer functions.

\begin{theorem}[Basis independence of LM-RLLC optimizers]\label{LM-RLLCinv} Let $k\in\mathbb{N},a\in\mathbb{R}^k,B\in\mathbb{R}^{k\times k},(c_1,c_2)\in\mathbb{R}^2$ and let $Q\in\mathbb{R}^{k\times k}$ be an invertible matrix. Then $\mathcal{F}(B,a,c_1,c_2)$ is essentially equivalent to $$\mathcal{F}(Q^{-1}BQ,Qa,c_1,c_2).$$  
\end{theorem}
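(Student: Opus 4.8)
\emph{Proof plan.} The strategy is to recognize $\mathcal{F}(B,a,c_1,c_2)$ as an instance of the RLLC construction and then to invoke the linear invariance of RLLC (\cref{RLLCinv}). By construction, $\mathcal{F}(B,a,c_1,c_2)$ is exactly the RLLC optimizer in the sense of \cref{RLLCdef} attached to the linear memory update rule $U(M,g)=MB+ga^{T}$ of \eqref{linupdt}, taken with no hidden state (so $\mathcal H$ is a one-point set). Hence it suffices to understand how this particular $U$ behaves under the conjugation operation $U\mapsto U^{Q}$ of \cref{memconj} and to read off the RLLC data of the resulting rule.

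First I would compute $U^{Q}$ for the linear rule. Unfolding \cref{memconj} (the hidden component being inert),
$$U^{Q}(M,g)=U\bigl(MQ^{-1},g\bigr)\,Q=\bigl(MQ^{-1}B+ga^{T}\bigr)Q=M\,(Q^{-1}BQ)+g\,(a^{T}Q).$$
So $U^{Q}$ is again a linear memory update rule of the form \eqref{linupdt}, with coefficient matrix $Q^{-1}BQ$ and with coefficient vector $\widetilde a$ determined by $\widetilde a^{T}=a^{T}Q$, i.e. $\widetilde a=Q^{T}a$. Therefore the RLLC optimizer attached to $U^{Q}$ is precisely $\mathcal{F}(Q^{-1}BQ,\widetilde a,c_1,c_2)$. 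Applying \cref{RLLCinv} to $U$ and $Q$ now gives an essential equivalence between the RLLC optimizer of $U$ and the RLLC optimizer of $U^{Q}$ — that is, between $\mathcal{F}(B,a,c_1,c_2)$ and $\mathcal{F}(Q^{-1}BQ,\widetilde a,c_1,c_2)$ — implemented by the partial isomorphism $\phi(M,L)=(MQ,\,Q^{-1}L)$ on states with rank-$k$ memory matrix. (The vector appearing here is $Q^{T}a$; the form $Qa$ in the statement corresponds to reading $a$ as the \emph{row} $a^{T}$ of update coefficients, which is all that enters $ga^{T}$, or equivalently to running \cref{RLLCinv} with $Q$ replaced by $(Q^{T})^{-1}$.)

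I do not expect a genuine obstacle: once the reduction to \cref{RLLCinv} is in place the remaining content is the single matrix identity above. The points that need care are purely bookkeeping. One is the transpose/indexing convention for $a$ just mentioned. The other — and the reason the conclusion is \emph{essential} equivalence and not literal equivalence — is that $\phi$ is only a partial isomorphism, since the identity $(MQ)^{+}=Q^{-1}M^{+}$ used inside the proof of \cref{RLLCinv} requires $M$ to have full column rank $k$. I would therefore state the equivalence on the subset of states with rank-$k$ memory matrix and note, as in \cref{RLLCinv}, that the two optimization runs then agree step for step (with initial data transported by $M_0\mapsto M_0Q$, $L_0\mapsto Q^{-1}L_0$) for as long as the memory matrix has full rank.
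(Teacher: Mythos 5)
Your proof is correct and follows the same route as the paper's: identify $\mathcal{F}(B,a,c_1,c_2)$ as the RLLC optimizer of the linear rule $U(M,g)=MB+ga^{T}$, compute the conjugated rule $U^{Q}$, and invoke Lemma~\ref{RLLCinv}. Your bookkeeping is in fact slightly more careful than the paper's: the conjugated rule is $M(Q^{-1}BQ)+g\,a^{T}Q$, so the new coefficient vector is $Q^{T}a$ as you note, whereas the paper writes $g(Qa)^{T}=ga^{T}Q^{T}$ --- a transpose slip that does not affect the substance (as $Q$ ranges over all invertible matrices either form yields the same family of normal forms), and your parenthetical correctly flags and reconciles it.
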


\begin{proof} The optimizer $\mathcal{F}(B,a,c_1,c_2)$ is obtained from the linear memory update rule $U(M,g)=MB+ga^T$ with RLLC. Notice that $U^Q$ (in the sense of definition \ref{memconj}) is given by $U^G(M,g)=M(Q^{-1}BQ)+g(Qa)^T$. Then lemma \ref{RLLCinv} finishes the proof.   
\end{proof}

\noindent{\bf Real Jordan normal form:}~~Theorem \ref{LM-RLLCinv} together with a variant of the Jordan decomposition theorem implies that we can transform LM-RLLC optimizers into a very special form without changing the optimization process. The original form of the Jordan decomposition theorem says that if $B\in\mathbb{C}^{k\times k}$ is an arbitrary complex matrix then there is an invertible matrix $Q\in\mathbb{C}^{k\times k}$ such that $Q^{-1}BQ$ has a block diagonal form with each block being a so-called Jordan block. A Jordan block $J_m(\lambda)$ is a matrix of size $m\times m$ with $\lambda\in\mathbb{C}$ in the diagonal, $1$ above the diagonal and $0$ everywhere else. For example

$$ J_3(\lambda)=
\begin{pmatrix}
\lambda & 1         & 0   \\
0         & \lambda & 1   \\
0         & 0         & \lambda \\
\end{pmatrix}
$$
There is a similar, although somewhat more complicated statement (called real Jordan normal form) if $B$ and $Q$ are required to be real matrices. In this case there are two types of blocks $J_m(\lambda)$ with $\lambda\in\mathbb{R}$ and $CJ_m(\alpha+\beta i)$ with $\alpha,\beta\in\mathbb{R}$. The second type of block has size $2m\times 2m$ and it "imitates" complex Jordan blocks with real matrices. This matrix is very similar to $J_m(\lambda)$ with the main difference being that each entry is replaced by a $2\times 2$ matrix. The $0$'s and $1's$ are replaced by $0$ matrices and identity matrices. The $\lambda$ entries are replaced by the matrix in equation $(\ref{comprep})$ which represents $\alpha+\beta i$ by a real matrix. 
For example 
$$ CJ_2(\alpha+\beta i)=
\begin{pmatrix}
\alpha & -\beta         & 1 & 0  \\
\beta  & \alpha & 0 & 1  \\
0      & 0      & \alpha & -\beta \\
0      & 0      & \beta  & \alpha
\end{pmatrix}
$$

\noindent{\bf Propagators of Jordan type:}~~Racall that an LM-RLLC optimizer is given by $B\in\mathbb{R}^{k\times k},a\in\mathbb{R}^k$ and two learning rates. By transforming the matrix $B$ to its real Jordan normal form with a basis transformation given by $Q\in\mathbb{R}^{k\times k}$, we can divide the memory units into groups belonging to single blocks of type $J_m(\lambda)$ or $CJ_m(\alpha+\beta i)$. The block diagonal form of $Q^{-1}BQ$ guarantees that these groups do not interact with each other in memory updates and thus we can treat them as separate propagators. Recall that in this basis transformation considered in theorem \ref{LM-RLLCinv} the vector $a$ transforms into $Qa$. By applying a statement which is slightly stronger then the Jordan decomposition theorem we can also guarantee that the part of $a$ in each block contains at most one coordinate with $1$ and the rest is $0$. We can also assume that this coordinate is the first one otherwise there are trivial memory units which store $0$ in each step. By summarizing all of this we obtain propagators of very special type. Let $e_m\in\mathbb{R}^m$ denote the vector with $1$ in the first coordinate and $0$ in the rest. We denote the propagator corresponding to the pair $(J_m(\lambda),e_m)$ by $M_m(\lambda)$ and the propagator corresponding to $(CJ_m(\alpha+\beta i),e_{2m})$ by $CM_m(\alpha+\beta i)$. We call such propagators {\it Jordan block propagators}. If $m=1$ then we omit the index and simply write $M(\lambda)$ and $CM(\alpha+\beta i)$. We obtain the next theorem.

\begin{theorem}[Normal forms of LM-RLLC optimizers] Every LM-RLLC optimizer is essentially equivalent with another LM-RLLC optimizer where the memory update is of the form $P_1\oplus P_2\oplus\dots\oplus P_r$ where each $P_i$ is a Jordan block propagator.
\end{theorem}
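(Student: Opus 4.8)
The plan is to combine \cref{LM-RLLCinv} with a suitably strong version of the real Jordan normal form, applied not only to the matrix $B$ but simultaneously to the vector $a$. First I would recall that by \cref{LM-RLLCinv}, for any invertible $Q\in\mathbb{R}^{k\times k}$ the optimizer $\mathcal{F}(B,a,c_1,c_2)$ is essentially equivalent to $\mathcal{F}(Q^{-1}BQ,Qa,c_1,c_2)$. So it suffices to find a single invertible $Q$ such that $Q^{-1}BQ$ is block diagonal with real Jordan blocks and, at the same time, $Qa$ has the prescribed sparse form (at most one nonzero coordinate per block, located in the first position of that block, with value $1$). The block-diagonal structure of $Q^{-1}BQ$ is exactly what guarantees the memory splits as $P_1\oplus\dots\oplus P_r$, since the union operation $\oplus$ was defined precisely as the block-diagonal combination of the $B$ matrices together with concatenation of the $a$ vectors; and once the part of $a$ inside each block is $e_{m}$ (respectively $e_{2m}$), each $P_i$ is by definition a Jordan block propagator $M_m(\lambda)$ or $CM_m(\alpha+\beta i)$.

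The key step is therefore the refined normal form statement: for every $(B,a)$ there is an invertible real $Q$ with $Q^{-1}BQ$ in real Jordan form and $Qa$ a $0/1$ vector supported on the leading coordinates of the blocks. I would prove this by working one generalized eigenspace (or conjugate pair of generalized eigenspaces) at a time, since $\mathbb{R}^k$ decomposes $B$-invariantly into these and the component of $a$ in each piece can be handled independently. Inside a single generalized eigenspace for a real eigenvalue $\lambda$, I would build the Jordan basis ``starting from $a$'': let $d$ be the smallest integer with $(B-\lambda I)^d a = 0$ restricted to that eigenspace, take the chain $a,(B-\lambda I)a,\dots,(B-\lambda I)^{d-1}a$ as the top Jordan chain (this makes $a$ the first generator of its block, contributing the coordinate $1$ in position one and $0$ elsewhere in that block), then extend to a full Jordan basis of the generalized eigenspace by the standard construction, noting that the remaining basis chains can be chosen so that $a$ has no component along them. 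The complex-conjugate case is the same argument carried out over $\mathbb{C}$ and then realified using the $2\times 2$ block representation of $\alpha+\beta i$ from equation (\ref{comprep}), which is exactly how $CJ_m$ and hence $CM_m$ were defined. Finally I would observe that if some block ends up with the leading coordinate of $a$ equal to $0$, then that entire block of memory units is updated by $M\mapsto MB'$ with no gradient injection, so started from $0$ it stays identically $0$; such blocks can be deleted without affecting the optimization process, which justifies assuming the nonzero coordinate is the first one.

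I expect the main obstacle to be the refined normal form itself — ordinary Jordan form gives block-diagonality of $B$ but says nothing about where $a$ lands, and one genuinely has to intertwine the choice of Jordan basis with $a$. The delicate point is the extension step: after fixing the top chain through $a$ inside a generalized eigenspace, one must verify that the quotient construction producing the remaining Jordan chains can be lifted to honest vectors in $\mathbb{R}^k$ that have zero $a$-component, equivalently that the subspace spanned by the $a$-chain has a $B$-invariant complement within the generalized eigenspace realizing the rest of the Jordan structure; this is true but is the one place needing care, and in the repeated-eigenvalue case one may need more than one chain containing a contribution from $a$ unless one first reduces to the cyclic situation. A clean way to package all of this is to cite the ``cyclic decomposition / rational-and-Jordan form'' refinement: the pair $(B,a)$ determines the $B$-cyclic subspace generated by $a$, on which $B$ acts with a single Jordan chain per eigenvalue, and a complementary invariant subspace on which $a$ vanishes; applying real Jordan form to each summand separately then yields exactly the claimed $(B,a)$. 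Once this lemma is in place, the theorem follows immediately by the two-line argument in the first paragraph, and it is worth remarking that the decomposition is not unique (the eigenvalues and block sizes are, but the ordering and the split of trivial blocks are not), which is consistent with the ``essentially equivalent'' qualifier.
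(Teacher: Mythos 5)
Your proposal takes essentially the same route as the paper: apply the basis-independence result (\cref{LM-RLLCinv}) with a $Q$ that puts $B$ into real Jordan form while simultaneously forcing $Qa$ to be supported on the leading coordinates of the blocks, then discard the blocks in which $a$'s contribution is trivial. The paper only asserts the refined normal form as ``a statement slightly stronger than the Jordan decomposition theorem'' without proving it, so your cyclic-subspace/Jordan-chain-through-$a$ argument correctly supplies the one step the paper leaves implicit, and your handling of blocks with vanishing leading coordinate matches the paper's remark about memory units that remain identically zero.
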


By slightly abusing the notation we will also use the formula $P_1\oplus P_2\oplus\dots\oplus P_r$ for the optimizer itself. For example $M(0.9)\oplus M_2(0.6)\oplus CM_2(0.3+0.2i)$ stands for a memory $7$ optimizer where the memory units are grouped and updated according to the propagators $M(0.9)$,$M_2(0.6)$ and $CM_2(0.3+0.2i)$.

\section{Experiments}

For our experiments, we used the Learned Optimization framework \cite{metz2022practical} as a starting point. The framework offers pre-trained and hyper parameter optimized optimizers. 
We compare our results with the most widely used optimizers as baseline: {\it Adam}, {\it SGD}, and {\it SGD with momentum}. We compare test loss and classification accuracy on MNIST\cite{deng2012mnist}, Fashin-MNIST\cite{xiao2017fashionmnist}, and CIFAR-10\cite{Krizhevsky2009LearningML} datasets. We experimented with dense, convolutional and residual neural networks.
The source code of our work is available publicly\footnote{\url{https://anonymous.4open.science/r/easytrace-843E/README.md}}. See implementation details in \cref{appendix:impleementation_details}.

\begin{table*}[!htb]
\fontsize{9pt}{9pt}\selectfont
\centering
\begin{tabular}{rcccccccccccccccl}\toprule
& \multicolumn{2}{c}{\textbf{MNIST}} && \multicolumn{2}{c}{\textbf{Fashion-MNIST}} && \multicolumn{3}{c}{\textbf{CIFAR-10}} & \\
\cmidrule(lr){2-3}
\cmidrule(lr){5-6}
\cmidrule(lr){8-10}
& \textbf{MLP} & \textbf{Conv} && \textbf{MLP} & \textbf{Conv} && \textbf{MLP} & \textbf{Conv} & \textbf{ResNet-20} \\
\midrule

\multirow{2}{*}{$SGD$} & 0.0882 &   0.0331 & & 0.3426 & 0.3673 & & 1.4084 & 0.8359 & 0.6010 & Loss\\
 & 98.16 & 98.56 & & 88.65 & 86.97 & & 52.18 & 71.37 & 80.93 & Acc\\
\addlinespace
\multirow{2}{*}{$Momentum SGD$} & 0.0856 & 0.0324 & & 0.3476 & 0.2732 & & 1.4108 & \textcolor{BlueViolet}{0.7850} & \textcolor{BlueViolet}{0.5757} & Loss\\
 & \textcolor{BlueViolet}{98.22} & 98.97 & & 88.67 & \textcolor{BlueViolet}{90.78} & & 51.99 & 73.17 & 81.36 & Acc\\
\addlinespace
\multirow{2}{*}{$Adam$} & \textcolor{BlueViolet}{\textbf{0.0758}} & 0.0304 & & \textcolor{BlueViolet}{0.3407} & \textcolor{BlueViolet}{0.2704} & & \textcolor{BlueViolet}{1.3858} & 0.7920 & 0.5857 & Loss\\
 & 97.83 & \textcolor{BlueViolet}{98.99} & & \textcolor{BlueViolet}{88.78} & 90.76 & & \textcolor{BlueViolet}{52.43} & \textcolor{BlueViolet}{73.70} & \textcolor{BlueViolet}{81.48} & Acc\\
\addlinespace
\arrayrulecolor{gray!50}
\hline
\arrayrulecolor{black}
\addlinespace
\multirow{2}{*}{$M(0.9)$} & 0.0844 & 0.0310 & & 0.3408 & \textcolor{OliveGreen}{0.2661} & & 1.4021 & 0.8030 & \textcolor{OliveGreen}{0.5301} & Loss\\
 & 98.21 & \textcolor{OliveGreen}{99.03} & & 88.64 & \textcolor{OliveGreen}{90.96} & & 52.13 & \textcolor{OliveGreen}{73.95} & \textcolor{OliveGreen}{83.08} & Acc\\
\addlinespace
\multirow{2}{*}{$M(0.9){\oplus}M(0.0)$} & 0.0888 & 0.0323 & & 0.3475 & \textcolor{OliveGreen}{0.2678} & & 1.3973 & 0.7977 & \textcolor{OliveGreen}{0.5353} & Loss\\
 & \textcolor{OliveGreen}{\textbf{98.26}} & 98.95 & & \textcolor{OliveGreen}{88.82} & \textcolor{OliveGreen}{90.98} & & 51.71 & \textcolor{OliveGreen}{74.11} & \textcolor{OliveGreen}{\textbf{83.38}} & Acc\\
\addlinespace
\multirow{2}{*}{$M(0.9){\oplus}M(0.8){\oplus}M(0.7)$} & 0.0829 & 0.0343 & & \textcolor{OliveGreen}{0.3359} & \textcolor{OliveGreen}{\textbf{0.2563}} & & 1.4142 & \textcolor{OliveGreen}{0.7734} & \textcolor{OliveGreen}{\textbf{0.5268}} & Loss\\
 & \textcolor{OliveGreen}{98.23} & 98.95 & & 88.67 & \textcolor{OliveGreen}{91.11} & & 51.55 & \textcolor{OliveGreen}{75.42} & \textcolor{OliveGreen}{83.19} & Acc\\
\addlinespace
\multirow{2}{*}{$M_2(0.6)$} & 0.0801 & 0.0800 & & \textcolor{OliveGreen}{\textbf{0.3220}} & 0.4488 & & \textcolor{OliveGreen}{\textbf{1.3444}} & 1.0404 & 0.5811 & Loss\\
 & 98.17 & 97.60 & & 88.75 & 84.31 & & \textcolor{OliveGreen}{\textbf{53.42}} & 63.59 & 81.35 & Acc\\
\addlinespace
\multirow{2}{*}{$M(0.9){\oplus}M_2(0.6)$} & 0.0861 & 0.0319 & & 0.3536 & \textcolor{OliveGreen}{0.2636} & & 1.4155 & \textcolor{OliveGreen}{0.7602} & \textcolor{OliveGreen}{0.5354} & Loss\\
 & 98.21 & \textcolor{OliveGreen}{98.99} & & \textcolor{OliveGreen}{89.03} & \textcolor{OliveGreen}{90.95} & & 52.08 & \textcolor{OliveGreen}{\textbf{75.87}} & \textcolor{OliveGreen}{82.84} & Acc\\
\addlinespace
\multirow{2}{*}{$M(0.9){\oplus}M(0.0){\oplus}M_2(0.6)$} & 0.0877 & 0.0287 & & 0.3498 & \textcolor{OliveGreen}{0.2596} & & 1.4028 & \textcolor{OliveGreen}{\textbf{0.7216}} & \textcolor{OliveGreen}{0.5393} & Loss\\
 & \textcolor{OliveGreen}{98.23} & \textcolor{OliveGreen}{\textbf{99.03}} & & \textcolor{OliveGreen}{89.14} & \textcolor{OliveGreen}{\textbf{91.29}} & & 51.83 & \textcolor{OliveGreen}{75.76} & \textcolor{OliveGreen}{82.73} & Acc\\
\addlinespace
\multirow{2}{*}{$M_3(0.6)$} & 0.0797 & 0.0539 & & \textcolor{OliveGreen}{0.3282} & 0.3735 & & \textcolor{OliveGreen}{1.3798} & 0.9433 & \textcolor{OliveGreen}{0.5445} & Loss\\
 & 98.22 & 98.31 & & \textcolor{OliveGreen}{\textbf{89.25}} & 87.08 & & \textcolor{OliveGreen}{53.37} & 66.68 & \textcolor{OliveGreen}{82.43} & Acc\\
\addlinespace
\multirow{2}{*}{$M(0.9){\oplus}M(-0.9){\oplus}CM(0.9i)$} & 0.0873 & 0.0334 & & 0.3671 & \textcolor{OliveGreen}{0.2624} & & 1.4029 & \textcolor{OliveGreen}{0.7647} & \textcolor{OliveGreen}{0.5337} & Loss\\
 & 98.09 & 98.97 & & 88.57 & \textcolor{OliveGreen}{91.06} & & 52.10 & \textcolor{OliveGreen}{75.44} & \textcolor{OliveGreen}{83.07} & Acc\\

\bottomrule
\end{tabular}
\caption{Loss and accuracy are reported across three different datasets, using three distinct network architectures. The first three rows are dedicated to benchmark optimizers, whereas the subsequent rows showcase our results. The best benchmark result for each task (dataset and architecture pair) are highlighted in blue. Instances where our optimizer exceeds the best baseline result are marked in green. Additionally, the absolute best value for each task is emphasized in bold font. The results are the average of 3 runs with different random seeds.}
\label{table:main_results}
\end{table*}


\begin{figure}[H]
\includegraphics[width=\textwidth/2]{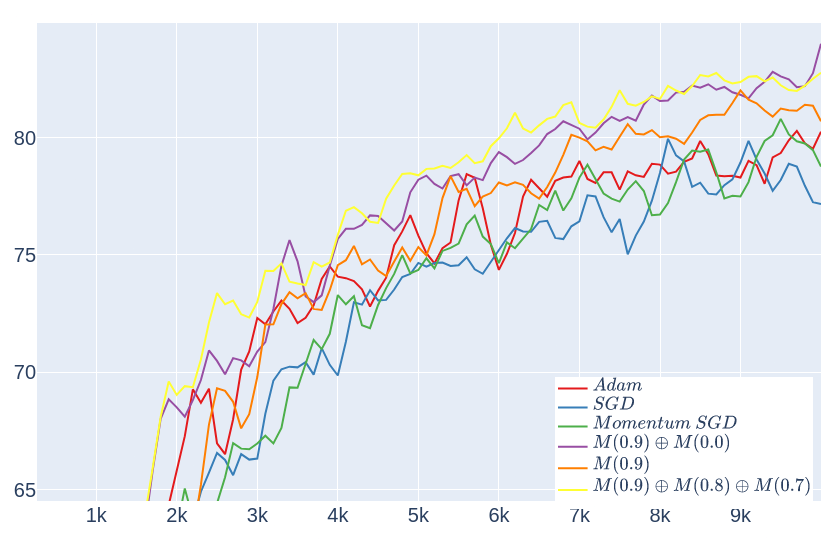}
\caption{
Test accuracy graphs of RLLC and benchmark optimizers, measured on the CIFAR-10 dataset, with the ResNet-20 network. RLLC optimizers show faster convergence and better generalization. See related plots and error bars in \cref{appendix:Supplementary_plots}.}
\label{fig:resnet_acc}
\end{figure}


\textbf{RLLC based adaptive learning rate:}~~ One of the simplest case of the RLLC method is already interesting. If there is a single memory unit containing the momentum of previous gradients then RLLC yields an adaptive version of the momentum SGD optimizer. In this case the learning law contains a single coefficient, that defines an adaptively changing learning rate for the momentum SGD. Our experiments show that this upgrade outperforms the plain momentum SGD method, showcasing the power of RLLC. See $M(0.9)$ results in \cref{table:main_results} and on \cref{fig:resnet_acc}. Note that RLLC can be applied to an arbitrary optimizer by introducing a single memory unit storing the last learning step. In a similar way we obtain a version of the optimizer with an adaptive learning rate. However it may depend on the optimizer whether it leads to a performance boost or not.

\textbf{Mixing SGD and momentum SGD:}~~ 
We observe an intriguing phenomenon when we enhance the memory unit of the previous method with the current gradient and monitor the learning law throughout the training process.
As shown in \cref{fig:SGD_MOM_weights}, during the initial phase of training, the coefficient of the $M(0.9)$ memory unit is predominant. However, as training advances, the coefficient of the $M(0)$ unit increases, leading to a reversal in the significance of the two memory units.
Our experiment supports \cite{SGD_MOM_paper} findings. \cref{table:main_results} and \cref{fig:resnet_acc} show results for $M(0.9) \oplus M(0)$.
An interesting additional detail is that in between the two extremal phases there exists a phase which emulates the Nesterov Accelerated Gradient (NAG) method. 
This occurs when the coefficient of the Momentum SGD memory unit, divided by the coefficient of the SGD unit, equals the decay parameter of the momentum SGD. (For more details, see \cref{SGD_Nesterov_proof}).

\begin{figure}[ht]
\includegraphics[width=8cm]{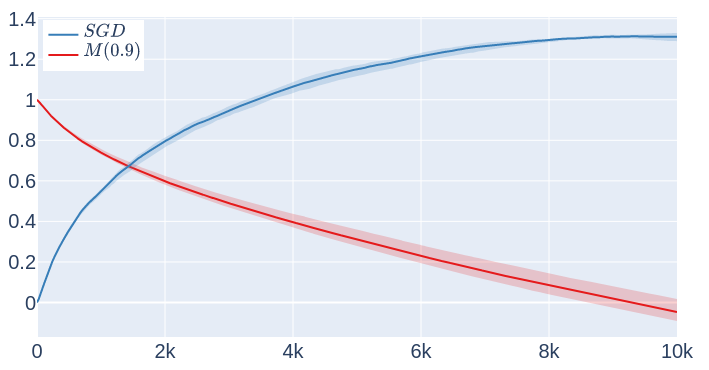}
\caption{
Analysis of $M(0.9) \oplus M(0.0)$ optimizer's memory unit's coefficients over time. 
The figure illustrates the optimizer's transition between momentum SGD and SGD, briefly aligning with the NAG optimizer around the 2k step.}
\label{fig:SGD_MOM_weights}
\end{figure}

\textbf{Multi-momentum propagators:}~~ In our experiment we investigated optimizers of the form $M(\beta_1)\oplus M(\beta_2)\oplus\dots\oplus M(\beta_k)$. We have not optimized the hyperparameters $\beta_i$ but we found very promising settings with a few trials. Our results are therefore illustrative and the fine tuning (depending on the type of network) is subject to 
further research. In \cref{table:main_results} we describe our experiments with $M(0.9)\oplus M(0),M(0.9)\oplus M(0.6),M(0.9) \oplus M(0.8) \oplus M(0.7)$ and $M(0.9)\oplus M(0.6)\oplus M(0)$ optimizers. Quite surprisingly the simplest one $M(0.9)\oplus M(0)$ (which is mentioned earlier) is the most reliable. However on certain tasks it is outperformed by the memory $3$ settings. \cref{fig:resnet_metaW} illustrates an interesting coupling between the coefficient of $M(0.8)$ and $M(0.7)$ memory units. See further details in \cref{appendix:additional_math_obs}. 

\begin{figure}[ht]
\includegraphics[width=8cm]{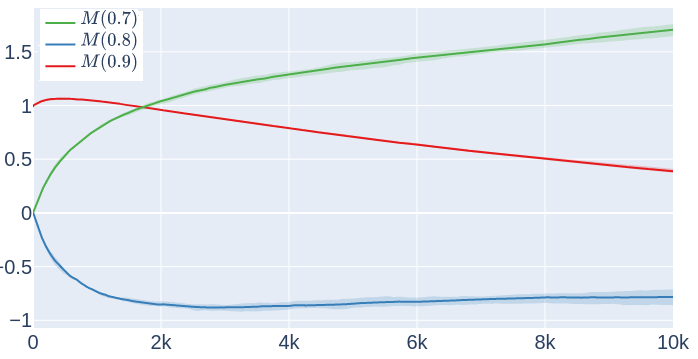}
\caption{
Analysis of the coefficients of $M(0.9) \oplus M(0.8) \oplus M(0.7)$ optimizer over time. The figure shows an interesting negative coupling between $M(0.8)$ and $M(0.7)$. See further details in \cref{appendix:additional_math_obs}.}
\label{fig:resnet_metaW}
\end{figure}

\textbf{$M_m(\lambda)$ propagator for $m\geq 2$:}~~
Jordan block propagators of the form $M_m(\beta)$ and $CM_m(\beta)$ with $m\geq 2$ are easy to implement in our code. In our experiments we focused on type $M_m(\beta)$ propagators with $m=2,3$. It is also interesting to combine them with other propagators. \cref{table:main_results} shows results for $M_2(0.6)$, $M_3(0.6)$ and $M(0.9) \oplus M_2(0.6)$. These configurations surpass the baseline optimizers in many tasks and also surpass pure multi-momentum propagators in some specific tasks.

\textbf{Complex-moment propagators:}~~
Another interesting possibility in our framework is the usage of complex-momentum propagators. One particular example that we experimented with is the case of $M(\beta){\oplus}M(-\beta){\oplus}CM(\beta i)$. This choice in not random. It comes from the Jordan normal form of the permutation matrix corresponding to the cyclic shift on $4$ elements multiplied with $\beta$. This particular propagator is closely related to Fourier analysis. 

\section{Limitations}
Using memory units comes at a cost. Each memory unit is a vector in the parameter space $\mathbb{R}^n$. In our experiments, we opted for relatively small or medium-sized architectures. However, for architectures with a vast parameter space, our approach with many memory units could prove to be too memory-intensive. 

It's also worth noting that our experiments were conducted on relatively small datasets, and future work should explore experiments on larger datasets.

\section{Conclusion} Our experiments demonstrate that the RLLC method is capable of boosting the performance of classical optimizers (such as SGD and momentum SGD) by combining them and making them more adaptive. Furthermore the case of linearly updated memory units provides a mathematically elegant framework with many new types of promising optimizers such as the ones corresponding to larger Jordan blocks, complex numbers and their combinations. We regard this paper as a starting point for future research in the frame of which the full potential of our approach is explored. One possible research direction is to introduce adaptively changing memory update rules. In particular, in the linear setting the pair $B\in\mathbb{R}^{k\times k},a\in\mathbb{R}^k$ (see \cref{linmemupdt}) is fixed for the whole optimization process in the current version. It would be interesting to study a version were $B$ and $a$ are also adaptively changing throughout learning.

\section{Acknowledgements} The first-named author received funding from the project KPP 133921 of the Hungarian Government. The research was also supported
by the Hungarian Ministry of Innovation and
Technology NRDI Office in the framework of the Artificial Intelligence National Laboratory
Program.

\newpage

\bibliography{paper}
\bibliographystyle{plainnat}

\newpage
\appendix
\onecolumn

\section{Interpolations between SGD, momentum SGD and Nesterov Accelerated Gradient (NAG)}
\label{SGD_Nesterov_proof}

As we have already explained, momentum SGD method in our interpretation is represented as the propagator $M(\beta)$ where $0\leq \beta<1$ is the decay parameter. In particular $M(0)$ corresponds to a memory unit which stores the last gradient seen by the optimizer. In this sense, a memory $1$ optimizer with memory unit $M(0)$ is basically an SGD optimizer. The learning law in this case is a single real number which manifests as a learning rate.  

If an optimizer has two memory units $M(\beta)$ and $M(0)$ then by changing the learning law (described by a pair of real numbers) we can continuously interpolate between momentum SGD and pure SGD. In this section we prove that there is a third well known optimizer which can be represented in this setting: the so-called Nesterov accelerated gradient (NAG). In the next list we summarize the meanings of special learning laws for $M(\beta),M(0)$.
\begin{enumerate}
\item $(0,c)$ : SGD 
\item $(c,0)$ : momentum SGD with decay parameter $\beta$ 
\item $(c\beta,c)$ : NAG with decay parameter $\beta$
\end{enumerate}
where the learning rate is $c*{\rm lr}$. Notice that since $\beta$ is a prescribed fix number, the above three cases don't cover the possible learning laws for the pair $M(\beta),M(0)$. 

The fact that the last learning law (if kept fixed!) is equivalent with NAG is an easy calculation from the equations describing the NAG optimizer: 

$$\phi_{t+1}=\theta_t-r\nabla f(\theta_t)$$
$$\theta_{t+1}=\phi_{t+1}+\beta (\phi_{t+1}-\phi_t).$$

Let 
$$v_{t+1}:=\phi_t-\phi_{t+1}.$$
With this notation we have the update rules:
$$v_{t+1}=\beta v_t+r\nabla f(\theta_t),$$
$$\theta_{t+1}=\theta_t-(r\nabla f(\theta_t)+\beta v_{t+1}).$$
Observe that if we introduce the update rule
$$m_{t+1}:=\beta m_t+\nabla f(\theta_t)$$ then $v_t=rm_t$ holds at any given time $t$. Furthermore $m_t$ corresponds to the propagator $M(\beta)$.
With this notation we have that
$$\theta_{t+1}=\theta_t-r(\nabla f(\theta_t)+\beta m_{t+1}).$$
Assume now that $r=c*{\rm lr}$ where ${\rm lr}$ is a fixed learning rate. Then we have that
$$\theta_{t+1}=\theta_t-{\rm lr}(c\nabla f(\theta_t)+c\beta m_{t+1})$$. 
This verifies our claim since $\nabla f(\theta_t)$ is the propagator $M(0)$.

\section{Additional mathematical observations}
\label{appendix:additional_math_obs}

\textbf{On the memory units of $M_k(\lambda)$ propagators:}~~ The $M_k(\lambda)$ propagator can naturally be interpreted as an iterated momentum propagator. Let $m_1,m_2,\dots,m_k$ denote the the memory units. The update rule of $M_k(\lambda)$ is given by
$$m_1\leftarrow m_1\beta+g$$
$$m_i\leftarrow m_i\beta+m_{i-1}~{\rm for}~i\geq 2.$$ Thus $m_1$ is the momentum vector of the gradient and $m_i$ (for $i\geq 2$) is the momentum vector of $m_{i-1}$.
One can compute that the abstract rule of $m_i$ is given by the sequence 
$\{\beta^{j-i+1}{{j}\choose{i-1}}\}_{j=0}^\infty.$ It follows that the subspace generated by the abstract rules of the memory units is the space of all sequences of the form $\{p(j)\beta^j\}_{j=0}^\infty$ where $p$ is a polynomial of degree at most $k-1$. This means that we can associate such a polynomial with each learning law. The RLLC method for $M_k(\lambda)$ basically adaptively navigates in this polynomial space.

\medskip

\textbf{Relation between Multi-momentum and $M_k(\lambda)$ propagators:}~~
The progression of the learning law of $M(0.9) \oplus M(0.8) \oplus M(0.7)$ presents an interesting phenomenon. As \cref{fig:resnet_metaW} shows, the coefficients of $M(0.8)$ and $M(0.7)$ memory units are noticeably coupled with opposite sign. One might assume that the algorithm is just trying to cancel their effects, but the performance improvement compared to $M(0.9)$ suggests that something more interesting is happening here. A deeper explanation relates this optimizer to another one of the form $M(0.9)\oplus M_2(0.75)$. More precisely if we consider $\mathcal{O}(\epsilon)=M(0.9)\oplus M(0.75+\epsilon)\oplus M(0.75-\epsilon)$ we find that as $\epsilon$ goes to $0$ the subspace spanned by the abstract rules of the memory units converges to the subspace spanned by the memory units of $M(0.9)\oplus M_2(0.75)$. In theory this convergence means that the optimizers themselves converge. Note that in practice we can not model $M(0.9)\oplus M_2(0.75)$ by $\mathcal{O}(\epsilon)$ because numerical instability arises if $\epsilon$ is very small.


\section{Implementation details}
\label{appendix:impleementation_details}

\subsection{Network architectures and training details}

\textbf{Dense network}
Our dense network comprises three hidden layers, each with a width of 128 and followed by a ReLU activation function. We did not include any normalization layers.

\textbf{Convolutional network}
Our convolutional network features a depth of three, with channel widths of 32, 64, and 64, each followed by a ReLU activation function. We did not incorporate any normalization layers. Following the convolutional layers, we apply max pooling and then a final dense layer.

\textbf{ResNet-20}
Our ResNet-20 variant adheres to established conventions for CIFAR-10, employing a three-level architecture with three residual blocks at each level. Each residual block is composed of the following sequence of layers: Convolution-Batch Normalization-ReLU-Convolution-Batch Normalization. A ReLU operation is applied after the addition operation in each residual block. The convolution kernels are 3x3 in size.

\subsection{Training details, hyperparameters}

In all reported experiments, we employed a batch size of 128 and trained the models for 10,000 iterations. We did not use a learning rate scheduler, to avoid any potential variance in its effect across different optimizers. We run every experiment with 3 different seed, and reported the average of the results.

During our hyperparameter optimization process, we tested the following potential values:
\begin{itemize}
  \item \textbf{Benchmark optimizers}
    \subitem learning rate: 1e-7, 3e-7, 1e-6, 3e-6, 1e-5, 3e-5, 1e-4, 3e-4, 1e-3, 3e-3, 1e-2, 3e-2, 1e-1, 3e-1, 1
  \item \textbf{Our optimizers}
    \subitem learning rate: 0.001, 0.003, 0.01, 0.03, 0.1, 0.3
    \subitem learning law - learning rate: 0.003, 0.01, 0.03
\end{itemize}

\subsection{Trainind datasets}

\textbf{CIFAR-10} The CIFAR-10 dataset \cite{cifar10_dataset} consists of $60000$ $32x32$ colour images in $10$ classes, with $6000$ images per class.
There are $50000$ training images, and $10000$ test images.
We used the canonical train–validation-test split, with $45000$ train, $5000$ validation, and $10000$ test images.
As a preprocessing, we normalized the images with the means $(0.4914, 0.4822, 0.4465)$ and standard deviations $(0.2023, 0.1994, 0.2010)$ for the three RGB channels, respectively.
On the \textit{ResNet} task we used random resized crop (with zoom scale $0.8$-$1.2$), horizontal flip, and random rotation.

\textbf{Fashion-MNIST} The Fashion-MNIST dataset \cite{fashion_mnist_dataset} consists of $70000$ $28x28$ monochrom images in $10$ classes, with $7000$ images per class. There are $60000$ training images, and $10000$ test images.
We used the canonical train–validation-test split, with $54000$ train, $6000$ validation, and $10000$ test images.
As a preprocessing, we normalized the images with the mean $0.3$ and stadard deviation $0.3$.

\textbf{MNIST} The MNIST dataset \cite{lecun2010mnist} consists of $60000$ $28x28$ monocrom images in $10$ classes, with $7000$ images per class.
There are $60000$ training images, and $10000$ test images.
We used the canonical train–validation-test split, with $54000$ train, $6000$ validation, and $10000$ test images.
As a preprocessing, we normalized the images with the mean $0.1307$ and standard deviation $0.3081$.

\section{Computational resources}
For our experiments we used a server with 8 A10040GB GPUs. We reported outcomes from a total of 16 optimizers, each optimized for hyperparameters across seven distinct tasks  (architecture - dataset pair). 
One round of hyperparameter optimization with three different random seeds took approximately 4 hours on our server, and we were able to run 16 paralelly.

Therefore, all of our results can be replicated in about 28 hours using the same setup, or in 224 hours on a single A100 40GB GPU.

\section{Supplementary Plots}
\label{appendix:Supplementary_plots}
\cref{fig:Supplementary_plots} shows additional accuracy plots for MLP and Convolutional tasks.
\cref{fig:error_bar} demonstrates, that the test accuracy is consistent on different random seeds for the demonstrated experiments.

\begin{figure}
\includegraphics[width=\textwidth]{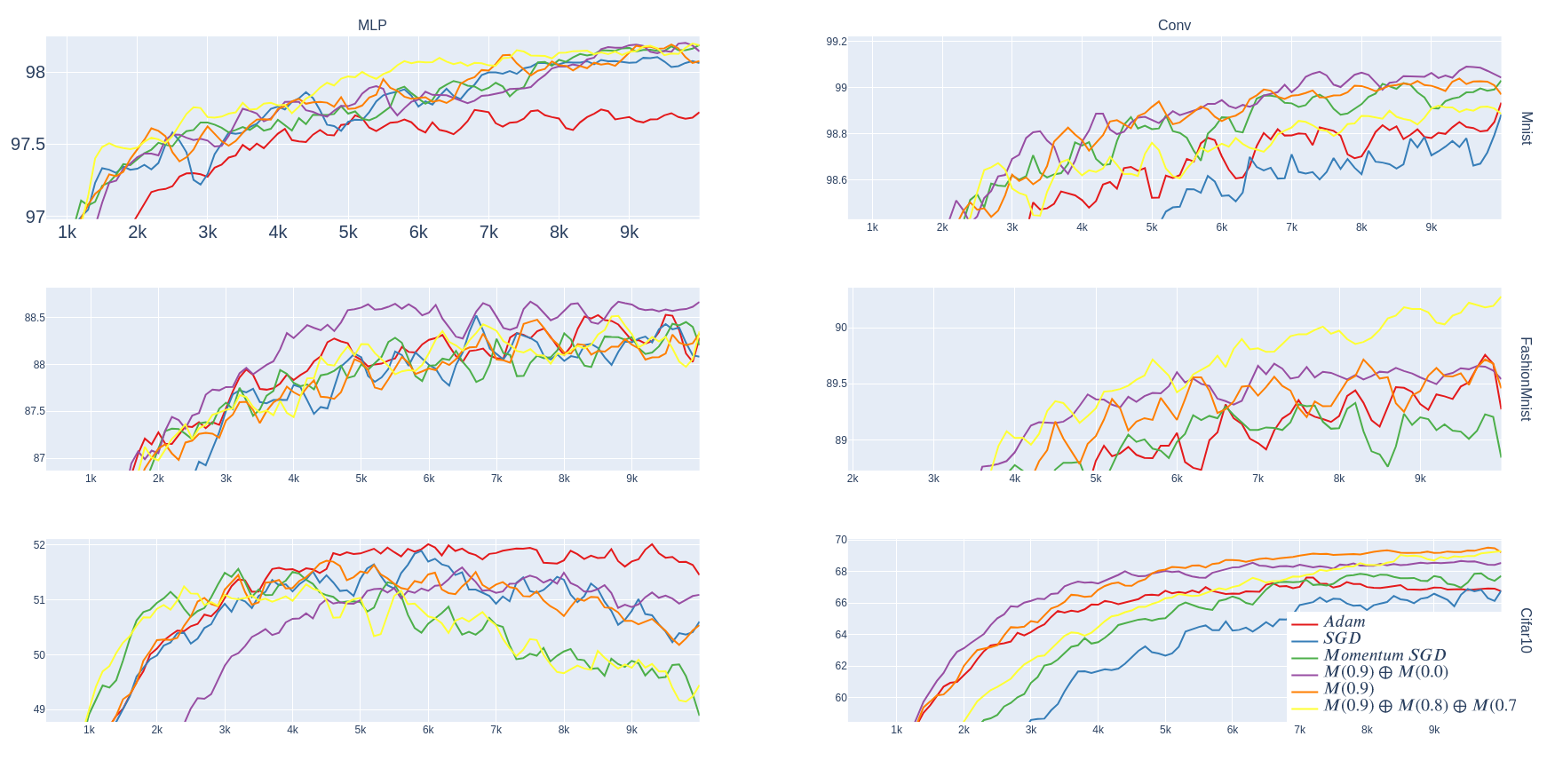}
\caption{Test accuracy graph of some RLLC optimizer, comparing with benchmark optimizers. On most of the tasks RLLC optimizers perform better, than the benchmark optimizers.}
\label{fig:Supplementary_plots}
\end{figure}

\begin{figure}
\includegraphics[width=\textwidth]{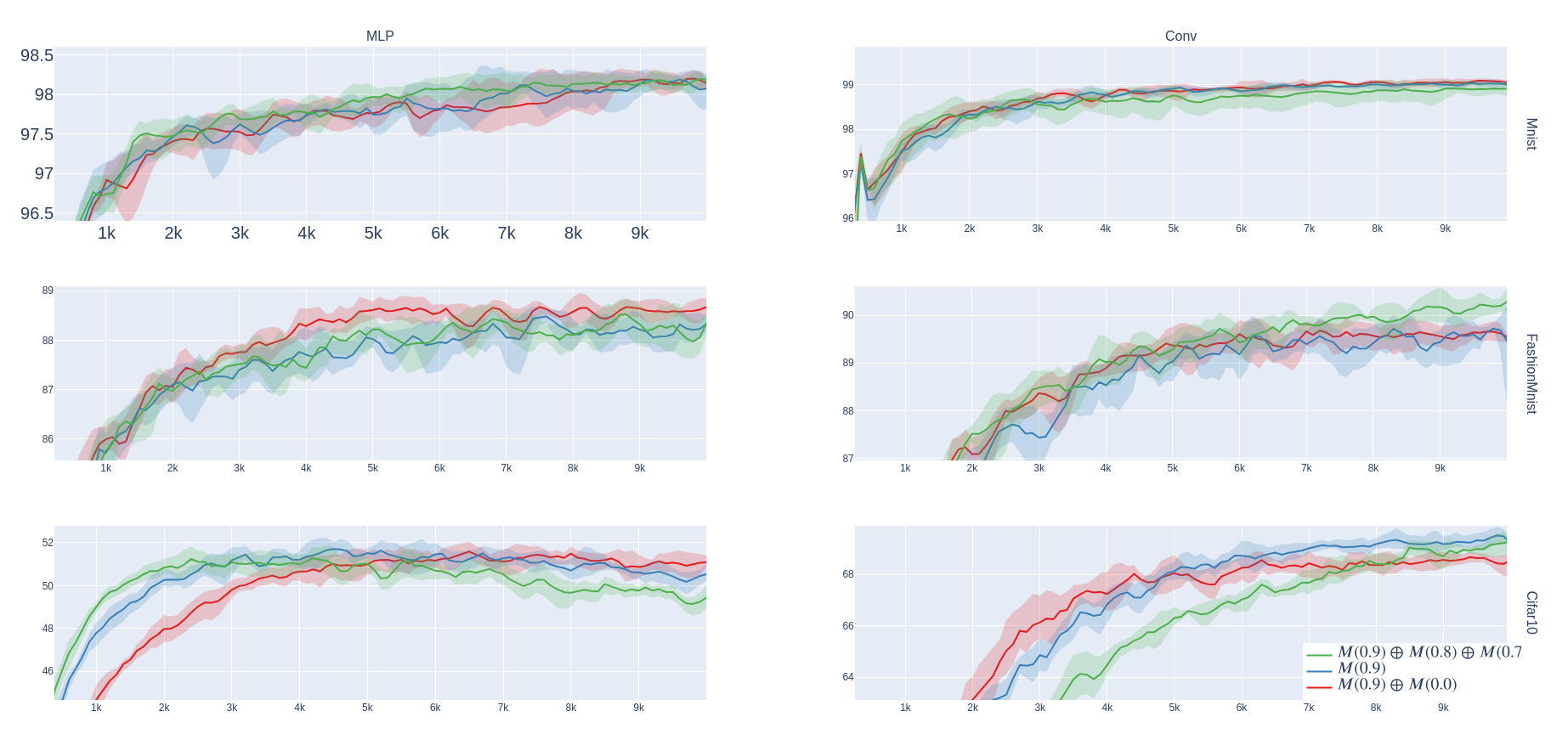}
\caption{Test accuracy graph of some RLLC optimizer, with min-max interval, trained from $3$ random seed initialization. The accuracy does not vary a lot, suggesting, that RLLC is robust.}
\label{fig:error_bar}
\end{figure}

\end{document}